\newcommand{\eps}{\ensuremath{\varepsilon}\xspace}
\newtheorem{definition}{Definition}[section]
\newtheorem{theorem}{Theorem}[section]
\title{Antipodes of Label Differential Privacy:\\ PATE and ALIBI}
\author{%
  Mani Malek\thanks{ \texttt{\{manimalek,imironov,krp,shilov\}@fb.com}, Facebook AI.}\And
  Ilya Mironov\footnotemark[1]\And
  Karthik Prasad\footnotemark[1]\And
  Igor Shilov\footnotemark[1]\And
  Florian Tram{\`e}r\thanks{
\texttt{tramer@cs.stanford.edu}, Stanford University.}
}
\begin{document}

% \section*{Todo}

% \begin{itemize}
% \end{itemize}

% \clearpage
%  \pagenumbering{arabic}

\maketitle

\begin{abstract}
We consider the privacy-preserving machine learning (ML) setting where the trained model must satisfy differential privacy (DP) with respect to the \emph{labels} of the training examples. We propose two novel approaches based on, respectively, the Laplace mechanism and the PATE framework, and demonstrate their effectiveness on standard benchmarks.

While recent work by Ghazi et al.~proposed Label DP schemes based on a randomized response mechanism, we argue that additive Laplace noise coupled with Bayesian inference (ALIBI) is a better fit for typical ML tasks. Moreover, we show how to achieve very strong privacy levels in some regimes, with our adaptation of the PATE framework that builds on recent advances in semi-supervised learning.

We complement theoretical analysis of our algorithms' privacy guarantees with empirical evaluation of their memorization properties. Our evaluation suggests that comparing different algorithms according to their \emph{provable} DP guarantees can be misleading and favor a less private algorithm with a tighter analysis.

Code for implementation of algorithms and memorization attacks is available from \url{https://github.com/facebookresearch/label_dp_antipodes}.
\end{abstract}

\section{Introduction}
%%%%%%%%%%%%%%%%%%%%%%%%%%%%%%
Sophisticated machine learning models perform well on their target tasks despite of---or thanks to---their predilection for data memorization~\cite{zhang2016understanding, arpit2017closer, feldman2020does}. When such models are trained on non-public inputs, privacy concerns become paramount~\cite{shokri2017membership, carlini2019secret, carlini2020extracting}, which motivates the actively developing area of privacy-preserving machine learning.

With some notable exceptions, which we discuss in the related works section, existing research has focused on an ``all or nothing'' privacy definition, where all of the training data, i.e., both \emph{features} and \emph{labels}, is considered private information. While this goal is appropriate for many applications, there are several important scenarios where \emph{label-only} privacy is the right solution concept.

A prominent example of label-only privacy is in online advertising, where the goal is to predict conversion of an ad impression (the label) given a user's profile and the spot's context (the features). The features are available to the advertising network, which trains the model, while the labels (data on past conversion events) are visible only to the advertiser. More generally, any two-party setting where data is vertically split between public inputs and (sensitive) outcomes is a candidate for training with label-only privacy.

In this work we adopt the definition of differential privacy as our primary notion of privacy. The definition, introduced in the seminal work by Dwork et al.~\cite{DworkMNS06}, satisfies several important properties that make it well-suited for applications: composability, robustness to auxiliary information, preservation under post-processing, and graceful degradation in the presence of correlated inputs (group privacy). Following Ghazi et al.~\cite{ghazi2021deep}, we refer to the setting of label-only differential privacy as Label DP.

\paragraph{Our contributions.} We explore two approaches---with very different characteristics---toward Label DP. These approaches and our methodology for estimating empirical privacy loss via label memorization are briefly reviewed below:
\begin{compactitem}
    \item \emph{PATE}: We adapt the PATE framework (\underline{P}rivate \underline{A}ggregation of \underline{T}eacher \underline{E}nsembles) of Papernot et al.~\cite{PATE17} to the Label DP setting by observing that PATE's central assumption---availability of a public unlabeled dataset---can be satisfied for free. Indeed, a public unlabeled dataset is simply the private dataset with the labels \emph{removed}! By instantiating PATE with a state-of-the-art technique for semi-supervised learning, we demonstrate an excellent tradeoff between empirical privacy loss and accuracy.
    
    \item \emph{ALIBI (\underline{A}dditive \underline{L}aplace with \underline{I}terative \underline{B}ayesian \underline{I}nference}): We propose applying additive Laplace noise to a one-hot encoding of the label. Since differential privacy is preserved under post-processing, we can de-noise the mechanism's output by performing Bayesian inference using the prior provided by the model itself, doing so continuously during the training process. ALIBI improves, particularly in a high-privacy regime, Ghazi et al.'s approach based on randomized response~\cite{ghazi2021deep}.
    
    \item \emph{Empirical privacy loss.} We describe a memorization attack targeted at the Label DP scenario that efficiently extracts lower bounds for the privacy parameter that, in some cases, come within a factor of 2.5 from the theoretical, worst-case upper bounds against practically relevant, large-scale models.
    For comparison, recent, most advanced black-box attacks on DP-SGD have approximately a 10$\times$ gap between the upper and lower bounds~\cite{jagielski2020auditing,nasr2021adversary}.
\end{compactitem}

Both algorithms, PATE and ALIBI, come with rigorous privacy analyses that, for any fixed setting of parameters, result in an $(\eps,\delta)$-DP bound. We emphasize that these bounds are just that---they are upper limits on an attacker's ability to breach privacy (in a precise sense) on worst-case inputs. Even though these bounds can be  pessimistic, intuitively, a smaller $\eps$ for the same $\delta$ corresponds to a more private instantiation of a mechanism, and indeed this is likely the case within an algorithmic family.

Can the privacy upper bounds be used to compare diverse mechanisms with widely dissimilar analyses, such as ours? We argue that focusing on the upper bounds alone may be misleading to the point of favoring a less private algorithm with tighter analysis over a more private one whose privacy analysis is less developed. For a uniform perspective, we estimate the \emph{empirical} privacy loss of all trained models by evaluating the performance of a black-box membership inference attack~\cite{shokri2017membership,jagielski2020auditing,nasr2021adversary}.

Our attack directly measures memorization of deliberately mislabeled examples, or \emph{canaries}~\cite{carlini2019secret}. Following prior work~\cite{nasr2021adversary, jagielski2020auditing}, we use the success rate of our attack to compute a lower bound on the level $\eps$ of label-DP achieved by the training algorithm. To avoid the prohibitive cost of retraining a model for each canary, we propose and analyze a heuristic that consists in inserting multiple canaries into one model, and measuring an attacker's success rate in guessing each canary's label. 
%We further show that tighter bounds on $\eps$ can be achieved if we allow the adversary to \emph{abstain} from making a guess on uncertain instances.
%The novelty of the attack is in using the group privacy property to get a lower bound on the DP parameters. Since the DP guarantees degrade exponentially with the size of the group, we may estimate privacy loss by injecting canaries in groups of varying sizes. 
In some settings, we find that the empirical privacy of PATE (as measured by our attack) is significantly stronger than that of ALIBI, even though the provable DP bound for PATE is much weaker.

Finally, to characterize setups where label-only privacy may be applicable, it is helpful to distinguish between \emph{inference} tasks, where the label is fully determined by the features, and \emph{prediction}, where there is some uncertainty as to the eventual outcome, despite (hopefully) some predictive value of the feature set. We are mostly concerned with the latter scenario, where the outcome is not evident from the features, or protecting choices made by individuals may be mandated or desirable (for connection between privacy and self-determination see Schwartz~\protect\cite{schwartz1999privacy}). While we do use public data sets (CIFAR-10 and CIFAR-100) in which it is possible to ``infer'' the label, we do so only to evaluate our approaches on standard benchmarks. We completely remove the triviality of \emph{inference} by adding mislabeled canaries in our attack, thereby introducing a measure of non-determinism. 

\section{Notation and Definitions}
%%%%%%%%%%%%%%%%%%%%%%%%%%%%%%
%\subsection{Notation}
Throughout the paper we consider standard supervised learning problems. The inputs are $(x,y)$ pairs where $x\in X$ are the \emph{features} and $y\in Y$ are the \emph{labels}. (If some labels are absent, the problem becomes semi-supervised learning.) The cardinality of the set $Y$ is the number of \emph{classes} $C$. The task is to learn a model~$M$ parameterized with weights~$W$ that predicts $y$ given $x$. To this end the training algorithm has access to a training set of $n$ samples $\{(x_i,y_i)\}_n$. %The \emph{empirical risk} is defined as $\sum_{i=1}^n \loss(M(W,x_i),y_i)$ where $\loss\colon Y\times Y\to\mathbb{R}$ is the \emph{loss function}. 

%\subsection{Differential Privacy}
Differential privacy due to Dwork et al.~\cite{DworkMNS06} is a rigorous, quantifiable notion of individual privacy for statistical algorithms. (See Dwork and Roth~\cite{dwork2014algorithmic} and Vadhan~\cite{vadhan2017complexity} for book-length introductions.)

\begin{definition}[Differential Privacy]\label{def:dp}We say that a randomized mechanism $\mathcal{M}\colon U\rightarrow R$ satisfies $(\eps,\delta)$-differential privacy (DP) if for all \emph{adjacent} inputs $D,D'\in U$ that differ in contributions of a single sample, the following holds:
\[
\forall S\subset R\quad \Pr[\mathcal{M}(D)\in S]\leq e^\eps\cdot \Pr[\mathcal{M}(D')\in S]+\delta.
\]
If $\delta=0$, we refer to it as $\eps$-DP.
\end{definition}

The notion of adjacency between databases $D$ and $D'$ is domain- and application-dependent. To instantiate the Label DP setting, we say that $D$ and $D'$ are adjacent if they consist of the same number of examples $\{(x_i,y_i)\}_n$ and are identical except for a difference in a single label at index $i^*$: $(x_{i^*},y_{i^*})\in D$ and $(x_{i^*},y'_{i^*})\in D'$.

\section{Background and related work}
%%%%%%%%%%%%%%%%%%%%%%%%%%%%%%
\textbf{DP and machine learning.} Differential privacy aligns extremely well with the goal of machine learning, which is to produce valid models or hypotheses on the basis of observed data. In fact, the relationship can be made precise. An important metric in ML is \emph{generalization error}, which measures the model's performance on previously unseen samples. Differentially private mechanisms provably generalize~\cite{dwork2015preserving}. For an introduction to some important early results on connections between differential privacy and learning theory see Dwork and Roth~\cite[Chapter 11]{dwork2014algorithmic}.

%Two approaches have emerged as most promising candidates for training deep neural networks with differential privacy: DP-SGD and PATE. DP-SGD~\cite{Abadi_2016} is a differentially private variant of stochastic gradient descent, generalizable to any first-order method and supported by leading ML platforms~\cite{tf-privacy,opacus}. PATE~\cite{PATE17} is a framework based on private knowledge aggregation of an ensemble model and knowledge transfer. Both approaches have been proposed and evaluated for preserving ``whole'' privacy of training examples: features and labels.

\textbf{Randomized response (RR).}  As a disclosure avoidance methodology, randomized response (RR)~\cite{Warner} precedes the advent of differential privacy. In RR's basic form, with some probability the sensitive input is replaced by a sample drawn uniformly from its entire domain. RR can be implemented at the source, making it an essential building block in achieving Local DP~\cite{rappor}.

More generally, RR is an instance of the \emph{input perturbation} technique, which leaves the ML training algorithm intact and achieves differential privacy via input randomization. Input perturbations methods are particularly friendly to implementers as they require minimal changes to existing algorithms, treating them as black boxes.

\textbf{Additive noise mechanisms.} Another important class of differentially private mechanisms are \emph{additive noise} methods. Two canonical examples are additive Laplace~\cite{DworkMNS06} and additive Gaussian~\cite{ODO} that can be applied to vector-valued functions. Their noise distributions are calibrated to the function's \emph{sensitivity}---the maximal distance between the function's outputs on adjacent inputs ($D$ and $D'$ in Definition~\ref{def:dp}) under the $\ell_1$ or $\ell_2$ metrics for the Laplace and Gaussian mechanisms respectively.

\textbf{PATE.} Private Aggregation of Teacher Ensembles (PATE)~\cite{PATE17, papernot2018scalable} is a framework based on private knowledge aggregation of an ensemble model and knowledge transfer. PATE trains an ensemble of ``teachers'' on disjoint subsets of the private dataset. The ensemble's knowledge is then transferred to a ``student'' model via differentially private aggregation of the teachers' votes on samples from an unlabeled public dataset. Only the student model is released as the output of the training, as it accesses sensitive data via a privacy-preserving interface.

Since PATE consumes the privacy budget each time the student queries the teachers, the training algorithm must be optimized to minimize the number of queries. To this end, Papernot et al.~explore the use of semi-supervised learning techniques, most notably GANs. Recent advances in semi-supervised learning have been shown to boost the performance of PATE on standard benchmarks~\cite{berthelot2019mixmatch}.

%In the subsequent paper by Papernot et al. ~\cite{papernot2018scalable} authors introduced the concept of teacher’s consensus. The updated training process only shares teacher ensemble decision with the student if there is high enough level of agreement among individual teachers. This achieves both better utility (as teachers are more likely to be right if they are confident) and privacy (as the chance of any individual vote switching the label is reduced).

\textbf{FixMatch.} FixMatch~\cite{sohn2020fixmatch} is a state-of-the-art semi-supervised learning algorithm, which achieves high accuracy on benchmark image datasets with very few labeled examples. It is based on the concepts of pseudo-labeling~\cite{lee2013pseudo} and consistency regularization~\cite{Bachman-consistency}: FixMatch uses a partially trained model to generate labels on weakly-augmented unlabeled data, and then, if the prediction is confident enough, uses the output as a label for the strongly-augmented version of the same data point.

%This method achieves high accuracy scores with only 25 examples per class (94.93\%  for CIFAR10; 71.71\% for CIFAR100).

\textbf{Label-only privacy.} The setting of label-only DP was formally introduced by Chaudhuri and Hsu~\cite{chaudhuri2011sample}, who proved lower bounds on sample complexity for private PAC-learners. Beimel et al.~\cite{BKS16} demonstrated that the sample complexity of Label DP matches that of non-private PAC learning (ignoring constants, dependencies on the privacy parameters, and computational efficiency). Wang and Xu~\cite{WX19} considered the linear regression problem in the local, label-only privacy setting.

Most recently, Ghazi et al.~\cite{ghazi2021deep} considered label-only privacy for deep learning. They propose a Label Privacy Multi-Stage Training (LP-MST) algorithm, which we revisit in Section~\ref{laplace-bayes}. The algorithm advances in stages (Ghazi et al.~evaluate it up to four), training the model on disjoint partitions of the dataset. In each stage, the private labels are protected using a randomized response scheme.

The most innovative part of LP-MST happens at the point of transition between stages. Labels of the training examples of the next partition are perturbed using randomized response. The (soft) output of the previous stage's model is interpreted as a prior, which is used to choose parameters for the randomized response algorithm \emph{separately for each example}. For instance, if the current model assigns high probabilities to labels 1--10 and low probabilities to labels 11--100, the randomized response will be constrained to outputting a label within the set 1--10.

We build on the insights of Ghazi et al.~by making two observations. First, training algorithms used for deep learning (such as SGD and its variants) can naturally handle soft labels (i.e., probability distributions over the label space); there is thus no need to force hard labels by sampling from the posterior. Second, by applying additive noise, we can repeatedly recompute the posterior distribution as better priors become available \emph{without} consuming privacy budget. By following the ``once and done'' approach to label perturbation, we forgo the need to split training into disjoint stages.

To summarize, Ghazi et al.~fix the prior at the beginning of the stage, choose parameters of the randomized response algorithm on the basis of this prior, and update the model by feeding it the perturbed label. We flip the order of these operations by applying a DP mechanism (additive Laplace) first, and then repeatedly use Bayesian inference by combining the prior that changes with each model update and the observables that do not. 

To our knowledge, we are the first to explore connections between PATE and Label DP.

% \section{Achieving Label DP}
% %%%%%%%%%%%%%%%%%%%%%%%%%%%%%%
% We describe details of our approaches to Label DP: ALIBI (Section~\ref{laplace-bayes}) and PATE (Section~\ref{pate}).
\section{PATE with Semi-Supervised Learning}\label{pate} 
The PATE framework~\cite{PATE17} splits the learning procedure into two stages. First, $T$ teachers are trained on disjoint partitions of the private dataset. Second, a student acquires labels for training its own model by querying the teacher ensemble on samples drawn from a public (non-sensitive) unlabeled dataset. Differential privacy is enforced at the interface between the student and the teachers, by means of a private aggregation of the teachers' votes.

The critical observation underlying the PATE privacy analysis is that a private sample taints at most one teacher due to disjointedness of the teachers' training sets. The voting mechanism bounds the maximal impact of a single teacher's votes across all student queries by injecting noise in each voting instance. (We use the Confident Gaussian aggregation mechanism and its analysis from~\cite{papernot2018scalable}.)

The Label DP setting allows for unrestricted use of the features. We leverage this capability for training both teachers and the student:
\begin{compactitem}
    \item The unlabeled examples on which the student queries the teacher ensemble are random samples from the dataset \emph{without} labels.
    
    \item All models---the teachers and the student---are trained with the FixMatch algorithm that uses (few) labeled examples and  has access to the entirety of the original dataset as the supplementary unlabeled data.
\end{compactitem}

Algorithms~\ref{pate-teachers} and~\ref{pate-student} adapt PATE to the Label DP setting. To draw a distinction between the PATE framework, which is generic and black-box, and its instantiation with the FixMatch algorithm for semi-supervised learning, we refer to the latter as PATE-FM. 
\begin{figure}
\begin{algorithm}[H]
\SetKwFunction{FixMatch}{FixMatch}
\SetKwFunction{TrainTeacherEnsemble}{TrainTeacherEnsemble}
\SetAlgoLined
\textbf{Input}: Dataset $D\gets\{(x_1,y_1),\dots, (x_n, y_n)\}$, number of teachers $T$, training procedure $\FixMatch(\mathit{labeled}\_\mathit{data}, \mathit{unlabeled}\_\mathit{data}$)\\
\BlankLine
 Define the unlabeled part of $D$: $D^{-} \gets \{x_1,\dots, x_n\}$\\
 Partition $D$ into $T$ equal-sized disjoint subsets $D^{(1)}, \dots, D^{(T)}$\\
\For{$i\gets 1$ \KwTo $T$}{
    $\mathit{teacher}_i \gets \FixMatch(D^{(i)}, D^{-}$)
}
\textbf{Output}: Model ensemble $\{\mathit{teacher}_1,\dots,\mathit{teacher}_T\}$
\caption{\texttt{PATE-FM.}\protect\TrainTeacherEnsemble}\label{pate-teachers}
\end{algorithm}

\begin{algorithm}[H]
\SetKwFunction{FixMatch}{FixMatch}
\SetKwFunction{TrainTeacherEnsemble}{TrainTeacherEnsemble}
\SetKwFunction{TrainStudent}{TrainStudent}
\SetAlgoLined
\textbf{Input}: Dataset $D=\{(x_1,y_1),\dots, (x_n, y_n)\}$, number of label classes $C$, number of teachers $T$, training procedure $\FixMatch(\mathit{labeled}\_\mathit{data}, \mathit{unlabeled}\_\mathit{data})$, number of student samples $K$, noise parameters $\sigma_1, \sigma_2$, voting threshold $\tau$\\
\BlankLine
 Define the unlabeled part of $D$: $D^{-} \gets \{x_1,\dots, x_n\}$\\
 Initialize student dataset $D_S \gets \emptyset$\\
 $\{\mathit{teacher}_1,\dots,\mathit{teacher}_T\} \gets \TrainTeacherEnsemble(D, T)$\\
 
 \While {$\vert D_S \vert < K$} {
    $x \gets$ random sample from $D^{-}$\\
    \For(\quad\tcp*[h]{\textrm{tallying up the votes}}){$i\gets 1$ \KwTo $C$}{
    $v_i\gets|\{j\colon \mathit{teacher_j}(x)=i\}|$\\
    } 
    \If {$\max_i\{v_i + \mathcal{N}(0, \sigma_1^2)\} \geq \tau$} {
    $y\gets \mathrm{argmax}_i\{v_i + \mathcal{N}(0, \sigma_2^2)\}$\\
    $D_S \gets D_S \cup \{(x,y)\}$
    }
 }
 $\mathit{student} \gets \FixMatch(D_S, D^{-})$\\
 \textbf{Output}: Model $\mathit{student}$\\
 \caption{\texttt{PATE-FM.}\protect\TrainStudent}\label{pate-student}
\end{algorithm}
\end{figure}

The PATE framework does not require the student and the teacher models to have identical architecture, or share hyperparameters. In our experience, however, the optimal privacy/accuracy trade-offs force very similar choices on these models.

The selection of the number of teachers, $T$, balances two competing objectives. On the one hand, each teacher has access to $n/T$ labeled and $n$ unlabeled examples, which means that a smaller $T$ corresponds to higher accuracy teacher models (due to more labeled examples per teacher). On the other hand, a larger $T$ allows for a higher level of noise
% FT: these are undefined here
%($\sigma_1$, $\sigma_2$, and the confidence threshold $T$) 
and lower privacy budget per query.

A similar dynamic controls $K$, the number of labels requested by the student. A higher number allows for more accurate training but inflates the privacy budget.

Boosting teachers' accuracy for a given number of labels and minimizing the number of student queries and their privacy costs are key to producing high utility models with strong privacy guarantees. To this end, we use the state-of-the-art algorithm for semi-supervised learning FixMatch~\cite{sohn2020fixmatch}.

The FixMatch algorithm's primary domain is image classification: in addition to labelled and unlabelled examples, it assumes access to a pair of weak and strong data augmentation algorithms. The loss function of FixMatch is a weighted sum of two losses that  enforce consistency of the model's prediction (i) between labeled examples and their weakly augmented versions, and (ii) between weakly and strongly augmented unlabeled images (restricted to high-confidence instances).

\section{ALIBI: Additive Laplace with Iterative Bayesian Inference}\label{laplace-bayes}
We describe the general approach of Soft Randomized Response with post-processing and its concrete instantiation, ALIBI.

\subsection{Soft Randomized Response}

Consider the application of~\emph{Randomized Response (RR)} to the setting of Label~DP. In its standard form, the random perturbation maps labels to labels, i.e., the label either retains its value with probability $p$ or assumes a random value with probability $1-p$.

We deviate from RR by replacing label randomization with a differentially private mechanism applied to a one-hot encoding of the training sample label; we refer to this mechanism as Soft Randomized Response (Soft-RR). By retaining more information (without significantly impacting privacy analysis), we support several possible post-processing algorithms (Section~\ref{ss:post-processing}). Additionally, we argue that Soft-RR performs better than RR in practice since it does not force wrong hard labels.

%Second, we post-process the mechanism's output given non-private information at hand. As a concrete instantiation of this framework, we propose ALIBI (Additive Laplace with Iterative Bayesian Inference) that combines additive Laplace mechanism and the post-processing via Bayesian inference with the prior provided by the partially trained model itself.

%The Bayesian inference step of the algorithm may appear redundant. Indeed, if the prior is the current model's output (i.e., the forward pass), one would expect the training algorithm to naturally incorporate  de-noising. However, our de-noising procedure leverages information that the training algorithm does not have, namely, the exact noise distribution, which in our case is additive Laplace with known parameters.

%Adding noise to the soft labels has some privacy implications, which we discuss next.

%Depending on the noise distribution, we consider either the $\ell_1$ (for Laplace) or the $\ell_2$ (for the Gaussian noise) metric in computing the sensitivity of the one-hot encoding. If $p$ and $q$ are two distinct one-hot vectors, then $\|p-q\|_1=2$ and  $\|p-q\|_2=\sqrt{2}$.

Following standard analysis (Dwork and Roth~\cite[Chapter 3.3]{dwork2014algorithmic}), $\eps$-DP can be achieved with additive Laplace noise with per-coordinate standard deviation $2\sqrt{2}/\eps$. Analogously, Gaussian noise sampled from $\mathcal{N}(0,\sigma^2\cdot \mathbb{I}_C)$ results in $(\eps,\delta)$-DP if $\sigma=\sqrt{2\ln(1.25/\delta)}/\eps$ and $\eps,\delta<1$ (\cite[Appendix A]{dwork2014algorithmic}).

% \begin{figure}[t]
%     \centering
%     \includegraphics[clip,width=8cm]{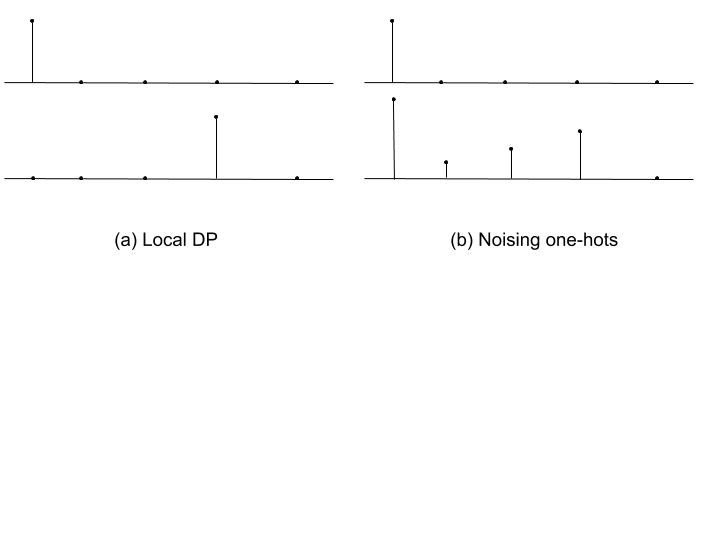}
%     \caption{sample image for ohm vs ldp.}
%     \label{fig:ohmvsldp} 
% \end{figure}

\subsection{Post-processing of Soft Randomized Response}\label{ss:post-processing}

The output of Soft-RR is no longer a valid soft label vector---values are not necessarily constrained to the [0,1] interval and do not sum up to 1. Thankfully, the fundamental property of DP allows privacy preservation under post-processing, which we use to obtain a probability mass function on perturbed labels to de-noise the mechanism's output.
Treating the noisy labels $\mathbf{o}$ as the observables, and $\lambda$ as the noise parameter,  we can calculate the posterior $p(y=c \mid \mathbf{o}, \lambda)$ by Bayes' rule as:
\begin{equation}\label{postproc-posterior-general}
    p(y=c \mid \mathbf{o}, \lambda) = \frac{p(\mathbf{o} \mid y=c , \lambda)\cdot p(y=c)}{\sum_k p(\mathbf{o} \mid y=k , \lambda)\cdot p(y=k)}.
\end{equation}
In the context of SGD, the current model's output can be interpreted as a prior probability distribution over the classes ($p(y)$).

ALIBI, or Additive Laplace with Iterative Bayesian Inference, is an algorithm that applies Bayesian post-processing to the output of the Laplace mechanism (Algorithm~\ref{alg:alibi}). With Laplace noise, the output of the mechanism for a specific label $k$ is distributed as:
\begin{equation} \label{laplace-prop}
    p(\mathbf{o} \mid y=k , \lambda) \propto e^{- \small|\mathbf{o}_k-1\small|/\lambda} \prod_{j \neq k} e^{- \small|\mathbf{o}_j \small|/\lambda}.
\end{equation}
Plugging (\ref{laplace-prop}) into
(\ref{postproc-posterior-general}) we derive:
\begin{equation}
    \label{laplace-posterior}
    p(y=c \mid \mathbf{o}, \lambda) = \mathrm{SoftMax}(f(o_c) / \lambda + \log p(y=c)),
\end{equation}
where $f(o_c) = - \sum_k \small|\mathbf{o}_k - [c = k]\small|$ and $[\cdot]$ is the Iverson bracket.

\begin{figure}
\begin{algorithm}[H]
\SetKwFunction{ALIBI}{ALIBI}
\SetKwFunction{BPP}{BPP}
\SetKwFunction{Update}{Update}
\SetAlgoLined
\SetNoFillComment
\textbf{Input}: Dataset $D=\{(x_1,y_1),\dots, (x_n, y_n)\}$, noise parameter $\lambda$, Bayesian post-processing mechanism $\BPP(\mathbf{o}, \lambda, \mathit{prior})$ defined in Eq.~(\ref{laplace-posterior}), optimizer $\Update(\mathit{model},\mathit{input},\mathit{soft\_labels})$
\BlankLine

Initialize noised dataset $D^\ast \gets \emptyset$\\
\For(\quad\tcp*[h]{noising labels}){$i\gets 1$ \KwTo $n$} {
    $\mathbf{o}_i \gets \mathrm{OneHot}(y_i) + \mathrm{Laplace}(\lambda)$\\
    $D^\ast \gets D^\ast \cup \{(x_i, \mathbf{o}_i)\}$
}
\BlankLine

Randomly initialize model $M$\\
\Repeat{convergence}{
    \For{$(x,\mathbf{o})\;\mathbf{in}\;D^*$} {
        $\mathbf{pred} \gets M(x)$\\
        $\mathbf{post} \gets \BPP(\mathbf{o}, \lambda, \mathbf{pred})$\\
        $M\gets \Update(M,x,\mathbf{post})$
    }
}
\BlankLine

\textbf{Output}: Model $M$
\caption{Additive Laplace with Iterative Bayesian Inference, \protect\ALIBI}\label{alg:alibi}
\end{algorithm}
\end{figure}

At a given privacy level, ALIBI empirically outperforms iterative Bayesian inference on additive Gaussian noise and na\"ive ``uninformed'' post-processing strategies, the details of which are deferred to Section~\ref{a:post-processing} (Supplemental materials).

\section{Evaluation}\label{s:evaluation}
%\subsection{Experimental design}
The algorithms are evaluated by training the  Wide-ResNet architecture~\cite{WideResNet} on the CIFAR-10 and CIFAR-100 datasets~\cite{Krizhevsky09learningmultiple}. The widening factor is set to 4 and 8 for CIFAR-10 and CIFAR-100 respectively. To facilitate comparison of privacy assured by the two approaches, we train our models to achieve similar accuracy levels and provide the computed theoretical $(\eps,\delta)$ upper bound. (Ignoring the privacy costs of hyperparameter tuning and of reporting the test set performance. The former can be minimized, with a modest loss in accuracy, by applying the selection procedure of Liu and Talwar~\cite{LT19-selection}.) We also provide the empirical privacy loss lower bound, $\eps_m$, as estimated by the black-box attacks~(Section~ \ref{s:memorization}).

PyTorch-based implementations of ALIBI, PATE-FM, and memorization attacks, including configuration options and hyperparameters, are publicly available~\cite{code}. 

\subsection{PATE-FM}
PATE-FM parameters are given in Table~\ref{table:pate-hyperparameters} (Supplemental materials). The student and the teacher models share the same architecture and are trained using FixMatch~\cite{sohn2020fixmatch, fixmatchpytorch} with hyperparameters from Sohn et al.~\cite{sohn2020fixmatch} (we use the RandAugment variant~\cite{randaugment}). The number of epochs is set so that the test accuracy reaches a plateau: for CIFAR-10, we train teachers for 40 epochs and the student for 200 epochs; for CIFAR-100, we train both the teachers and the student for 125 epochs. To report privacy, we set $\delta = 10^{-5}$. The student's accuracy is reported as the average of three runs (for a fixed teacher ensemble).

There are several hyperparameters affecting privacy/utility trade-off in PATE: number of teachers, level of noise, voting threshold, and number of labeled samples in the student dataset. Apart from the expected dependencies (increasing noise and the number of teachers benefits privacy at some accuracy cost), we observe the following:
\begin{compactitem}
    \item The number of labeled student samples beyond a certain level (10--25 samples per class) has little effect on the eventual model accuracy. This is in contrast with the standard (non-private) FixMatch where increasing the number of labeled samples usually improves accuracy.
    \item The voting threshold $\tau$ needs to be kept between 10\% and 75\% of the overall number of teachers. As pointed out in~Papernot et al.~\cite{papernot2018scalable}, a lack of consensus between teachers hurts both privacy and utility, as the teacher ensemble errs more often on such inputs and a single vote can have more impact on the outcome. Setting the threshold too high leads to a selection bias in training samples, forcing the student model to train only on the most trivial samples. Even in the zero-noise regime, the threshold above 75\% considerably hurts the accuracy.
\end{compactitem}

\subsection{ALIBI}
We use mini-batch SGD with a batch size of 128 and a momentum of 0.9. We train all models for 200 epochs and pick the best checkpoint. We tune the learning rate, weight decay and noise multiplier to obtain the desired accuracy while minimizing privacy $\eps$ (see Table~\ref{table:alibi-hyperparameters} in Supplemental materials). Since ALIBI is based on the Laplace mechanism, it achieves pure DP corresponding to $\delta=0$.

%We use the same random seed as that of the PATE student models, to make it possible for the black-box attack of the two approaches to be comparable.\ilya{Can anyone explain to me what fixing the seed does across different training regimes?} \igor{presumably, since the canary dataset can introduce bias and hurt the model performance, we want to ensure it's consistent?}

\subsection{Results} \label{ss:results}
Table~\ref{table:comparitive-results} presents the privacy budget and the memorization metric at three levels of accuracy across our two approaches for the both datasets.
Additionally, for ALIBI, we provide the results of a higher accuracy model on the CIFAR-100 task; this higher accuracy was not attainable with PATE-FM.

For correct comparison with LP-2ST on CIFAR-100, we factor out modelling differences by applying both mechanisms to training the same architecture---ResNet18~\cite{resnet,resnet-source} with appropriate modifications to suit the task. Results are presented in Table~\ref{table:ghazi-vs-alibi}. For reference, non-private baselines with this model achieve $\approx$ 95.5\% on CIFAR-10 and $\approx$ 79\% on CIFAR-100.

We note that there is a significant difference in computational resources required to train with PATE-FM and ALIBI. The reasons are twofold. First, in our experiments, models trained in a semi-supervised regime with few labeled examples take longer to converge. Second, PATE requires training the teacher ensemble, in which computational costs scale linearly with the size of the ensemble. Overall, this accounts for PATE-FM consuming 1{,}000--1{,}500$\times$ more computational  resources (GPU$\cdot$hr) compared to ALIBI. 

\begin{table}[t]
  \centering
  \caption{Privacy of PATE-FM~\cite{PATE17,sohn2020fixmatch} and ALIBI, on CIFAR-10 and CIFAR-100 using Wide-ResNet18, matched by test accuracy levels. $\eps$ for PATE is at $\delta=10^{-5}$. Empirical privacy loss $\eps_m$ is reported as a 95\% confidence interval (CI).}
  \label{table:comparitive-results}
  \vspace{0.5em}
  \begin{tabular}{@{} lllrcc @{}}
    %\toprule
    Dataset & Accuracy level & Algorithm & Accuracy & $\eps$ & 95\%-CI $\eps_m$ \\
    \toprule
    \multirow{7}{*}{CIFAR-10} & \multirow{2}{*}{High}    & PATE-FM  & 93.7\% & 1.6 & 0.0--0.9 \\
    {}                        & {}                       & ALIBI & 94.0\% & 8.0 & 2.9--4.0 \\
    \cmidrule{2-6}
    {}                        & \multirow{2}{*}{Medium}  & PATE-FM  & 86.9\% & 0.29 & 0.0--0.6 \\
    {}                        & {}                       & ALIBI & 84.2\% & 2.1 & 1.0--2.2 \\
    \cmidrule{2-6}
    {}                        & \multirow{2}{*}{Low}     & PATE-FM  & 73.4\% & 0.18 & 0.0--0.4 \\
    {}                        & {}                       & ALIBI & 71.0\% & 1.0 & 0.0--2.2 \\
    \midrule
    \multirow{9}{*}{CIFAR-100} & \multirow{2}{*}{Very High} & PATE-FM & \multicolumn{1}{c}{--} & -- & -- \\
    {}                         & {}                      & ALIBI & 75.3\% & 8.1 & 3.5--4.5 \\
    \cmidrule{2-6}
    {}                         & \multirow{2}{*}{High}   & PATE-FM  & 69.9\% & 715 & 0.4--1.0 \\
    {}                         & {}                      & ALIBI & 71.4\% & 6.3 & 2.8--3.5 \\
    \cmidrule{2-6}
    {}                         & \multirow{2}{*}{Medium} & PATE-FM  & 50.0\% & 16 & 0.2--0.7 \\
    {}                         & {}                      & ALIBI & 51.6\% & 3.0 & 1.4--2.4 \\
    \cmidrule{2-6}
    {}                         & \multirow{2}{*}{Low}    & PATE-FM  & 30.5\% & 7.9 & 0.2--0.6  \\
    {}                         & {}                      & ALIBI & 31.4\% & 2.0 & 0.6--1.0 \\
    \bottomrule
  \end{tabular}
\end{table}

\begin{table}[t]
  \centering
  \caption{Test accuracy of ALIBI and LP-2ST~\cite[Table 4]{ghazi2021deep}, on CIFAR-100 applied to ResNet18, matched by~$\eps$.}
  \label{table:ghazi-vs-alibi}
  \begin{tabular}{@{} lrrrrr @{}}
    %\toprule
    Algorithm & $\eps = 3$ & $\eps = 4$ & $\eps = 5$ & $\eps = 6$ & $\eps = 8$ \\
    \toprule
    ALIBI & \textbf{55.0} & \textbf{65.0} & \textbf{69.0} & \textbf{71.5} & \textbf{74.4} \\
    LP-2ST & 28.7 & 50.2 & 63.5 & 70.6 & 74.1 \\
    \bottomrule
  \end{tabular}
\end{table}

\section{Measuring Memorization}\label{s:memorization}
We complement our evaluation with an empirical study of the (label) memorization abilities of classifiers trained with PATE-FM and ALIBI.
While we can derive \emph{provable} upper-bounds on the level of $\eps$-Label DP provided by both algorithms, a direct comparison between the (actual) privacy of both algorithms is challenging as they rely on very different privacy analyses. As we will see, for CIFAR-100, PATE-FM empirically appears to provide much stronger privacy guarantees than ALIBI, even though the privacy analysis suggests the opposite.

To empirically measure privacy loss, we effectively instantiate the implicit adversary in the DP definition~\cite{nasr2021adversary, jagielski2020auditing}. We construct two datasets $D, D'$ that differ only in the label of one example, and train a model on one of these two datasets. We then build an attack that ``guesses'' which of the two datasets was used. The attacker's advantage over a random guess can be used to derive an empirical lower bound on the mechanism's differential privacy. A formal treatment of memorization attacks using the framework of security games appears in Section~\ref{apx:attack} (Supplemental materials).

We point out the following differences between our approach and prior work. When providing lower bounds on privacy leakage, it is appropriate to consider the most powerful attacker pursuing the least challenging goal. In the context of standard DP (privacy for labels and features), it means an attacker that can add or remove a sample and seeks to determine whether the sample was part of the training dataset (i.e., membership inference~\cite{shokri2017membership}). In contrast, we operate within the constraints of Label DP, where the attacker's power is limited to manipulating a single label and the goal is to infer which label was used during training. (The closest analogue is attribute inference~\cite{YeomGFJ18}.)

Prior work replicates a memorization experiment many times (e.g., $1{,}000$ times~\cite{nasr2021adversary}) to lower bound~$\eps$ with standard statistical tools~\cite{nasr2021adversary, jagielski2020auditing}. 
As this approach is prohibitively expensive in our setting (prior work computed DP lower bounds for simpler datasets such as MNIST), we introduce the following heuristic: we train a \emph{single} model on a training set with $1{,}000$ labels randomly flipped to a different class. For each of these ``canaries''~\cite{carlini2019secret} with (flipped) label $y'$, the attacker has to guess whether the canary's assigned label is $y'$ or $y''$, where $y'' \neq y'$ is a different random incorrect label. If the adversary has accuracy $\alpha$ in this guessing game, we can lower bound $\eps$ by $1-\log(\frac{\alpha}{1-\alpha})$.
As in prior work~\cite{nasr2021adversary, jagielski2020auditing}, we compute a $95\%$ confidence interval on the adversary's guessing accuracy~$\alpha$ and thus on the level of privacy~$\eps$ against our attack.
We formally define our attacking game and analyze the relationship between the attacker's success and the privacy budget $\eps$ in Section~\ref{apx:attack} (Supplemental materials).
%Our final adversary proceeds as follows. Given a canary point $x$ and two random (wrong) labels $y', y''$, one of which was used to train the model $M$, the adversary abstains from guessing if the model's confidence on both labels is below some pre-defined threshold $\tau$, i.e., $M(x)_{y'} < \tau$ and $M(x)_{y''} < \tau$. Otherwise, the adversary guesses that the training set contained $(x, y')$ if and only if $M(x)_{y'} > M(x)_{y''}$, i.e., if the model (partially) \emph{memorized} the canary label. 

To isolate the impact that can be attributed to the difference between privacy-preserving training procedures, we apply ALIBI and PATE-FM to the same target model architecture and tune their privacy parameters to achieve similar accuracy levels. (For discussion of confounders, see Erlingsson et al.~\cite{EMRS19-thatwhich}.) Furthermore, we deliberately limit memorization attacks to be trainer-agnostic, to avoid privileging, for instance, a more complex training procedure over a simpler one.

Our empirical estimates of the privacy loss $\eps_m$ are given in Table~\ref{table:comparitive-results}.
The estimates for ALIBI are approximately within a factor of $2$ of the theoretical upper bound on $\eps$. This suggests that the theoretical privacy analysis of ALIBI is close to tight. On the other hand, for PATE-FM our empirical estimates of the privacy loss are up to three orders of magnitude smaller than the theoretical upper bound. While we have no guarantee that our attack is the strongest possible (and it likely is not), our experiment does suggest that for a fixed accuracy level, PATE-FM is much less likely to memorize labels compared to ALIBI (see Figure~\ref{fig:memorization}). Empirically, it thus appears that PATE-FM provides a much better privacy-utility tradeoff than ALIBI, even though the theoretical analysis suggests the opposite.

This being said, we encourage critical interpretation of privacy claims backed by memorization estimates. Attacks, and the corresponding lower bounds on privacy loss, are provisional and conditional; they can be shattered with better algorithms or additional resources. The empirical lower bounds should be contrasted with provable upper bounds that provide the most conservative guidance in selecting privacy-preserving algorithms and setting their parameters.

\begin{figure}[t]
    \centering
    \begin{subfigure}[b]{0.4\textwidth}
         \centering
         \includegraphics[width=\textwidth]{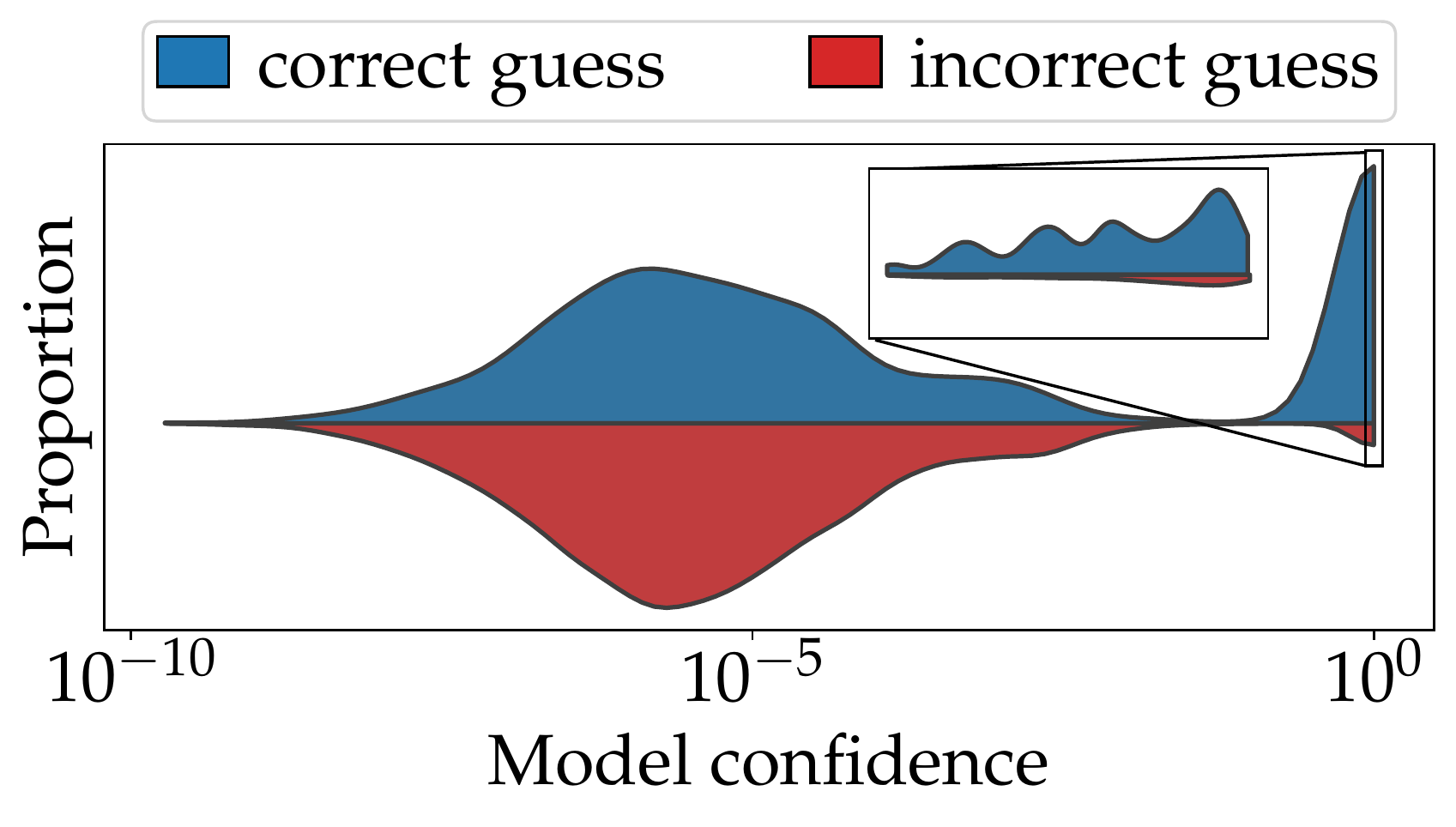}
         \caption{ALIBI (CIFAR-100, High)}
         \label{fig:memorization_alibi}
     \end{subfigure}
     \hspace{3em}
     \begin{subfigure}[b]{0.4\textwidth}
         \centering
         \includegraphics[width=\textwidth]{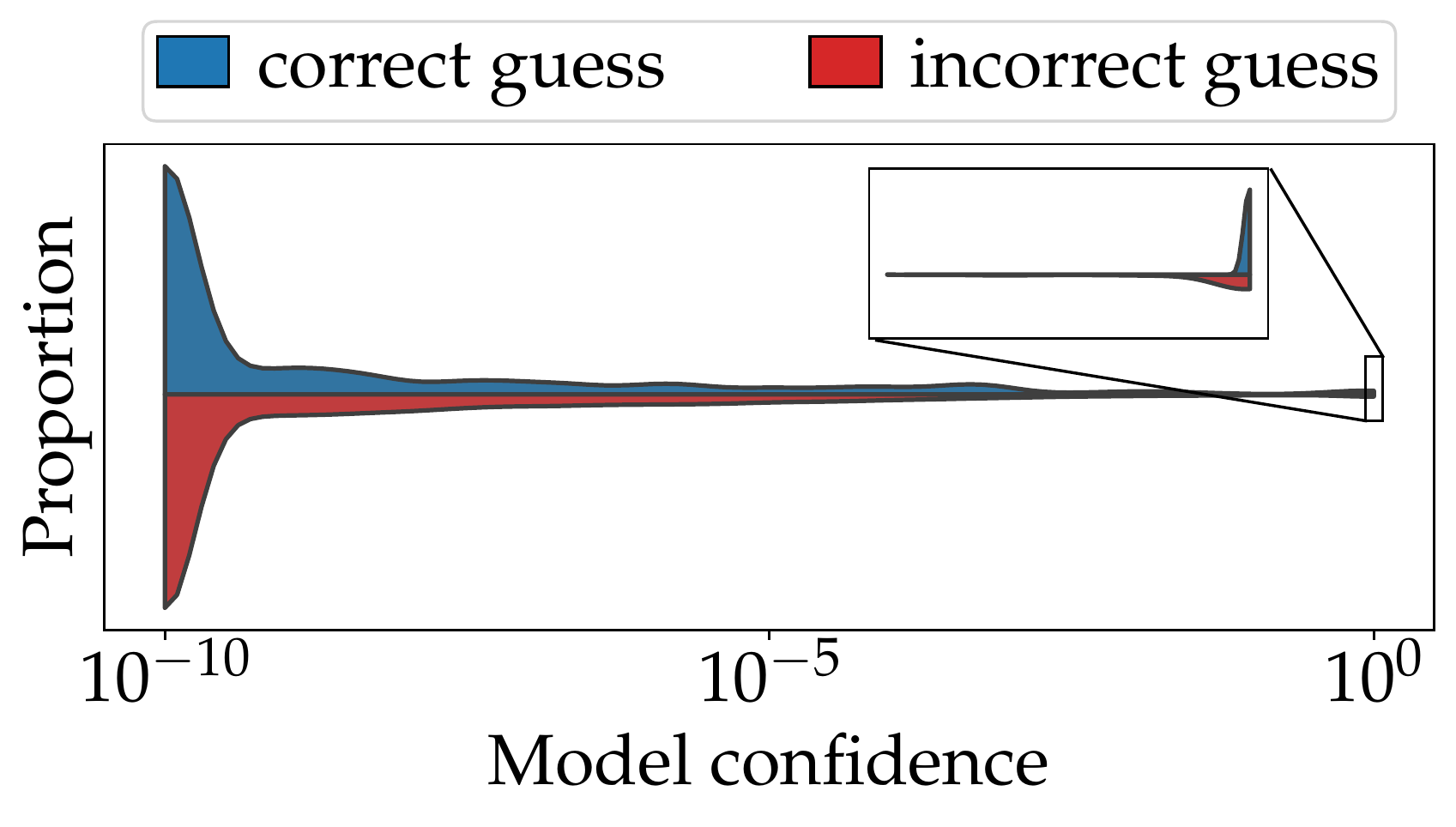}
         \caption{PATE-FM (CIFAR-100, High)}
         \label{fig:memorization_pate}
     \end{subfigure}
    \vspace{-0.25em}
    \caption{Comparison of the adversary's success rates in guessing the label of inserted canaries with ALIBI and PATE-FM (for CIFAR-100, with High accuracy level). We sort canaries by the model's maximal confidence on the two labels $y', y''$ that the adversary has to guess from. The violin plots show the proportion of correct guesses (the model is more confident in the canary label) and incorrect guesses. To maximize the adversary's distinguishing power, we limit the adversary to only issuing guesses for canaries where the model's confidence in either $y'$ or $y''$ is above $99\%$ (zoomed-in box). ALIBI is clearly more vulnerable to the attack (i.e., it is easier for the adversary to guess a canary's label), even though its theoretical privacy analysis yields a much lower bound on $\eps$ than for PATE-FM.}
    \label{fig:memorization}
    \vspace{-1em}
\end{figure}

\section{Conclusion}
We have proposed and evaluated two approaches toward achieving Label DP in ML: ALIBI and PATE-FM (the PATE framework instantiated with FixMatch). We have demonstrated that both approaches can achieve state-of-the-art results on standard datasets. As the two approaches have different theoretical foundations and exhibit very different performance characteristics, they also present an interesting combination of trade-offs.

PATE-FM offers impressive empirical privacy despite a very conservative theoretical privacy analysis. On the other hand, its training setup is fairly complicated. The dependency on semi-supervised learning techniques further restricts the tasks this method can be used for at present.

In contrast, ALIBI's implementation and interpretation are quite simple. (1) It is a generic algorithm that is compatible with any iterative procedure for ML training, such as SGD or its variants. It is not a black-box however, and requires the ability to update training data labels as they are processed by the training algorithm. (2) It enjoys a tighter privacy analysis resulting in a more realistic upper bound on the privacy budget, but appears to offer less privacy than PATE-FM empirically. (3) It can be used to train models to accuracy levels which PATE cannot achieve without significantly more data.

In addition to privacy and performance trade-offs, our two algorithms also differ significantly in their time and computational resource requirements (as discussed in the Section~\ref{ss:results}).

Beyond the concrete improvements offered by our new techniques, our work emphasizes the need for more research across multiple avenues: (1) bridging the gap between privacy lower bounds, backed by attacks, and upper bounds, based on privacy analyses (in particular for PATE); (2) making the privacy analysis of different approaches comparable; 
(3) designing stronger attack models to better capture empirical privacy guarantees.

%%%
% Have you ever tried shawarma? There's a shawarma joint about two blocks from here.
% I don't know what it is, but I want to try it.

\section{Acknowledgements and Funding Sources}
The authors are grateful to NeurIPS anonymous reviewers and area chairs for their helpful comments. Florian Tram{\`e}r is supported by NSF award CNS-1804222.

\printbibliography

% \newpage

%%%%%%%%%%%%%%%%%%%%%%%%%%%%%%%%%%%%%%%%%%%%%%%%%%%%%%%%%%%%
%\newpage
%\section*{Checklist}
%\input{checklist}

% \newpage
\appendix

\section{Hyperparameters}\label{s:hyperparameters}

Tables~\ref{table:pate-hyperparameters} and~\ref{table:alibi-hyperparameters} summarize hyperparameters for PATE-FM and ALIBI respectively.

\begin{table}[ht]
  \centering
   \caption{PATE-FM (Algorithms~\ref{pate-teachers} and~\ref{pate-student}) hyperparameters for select accuracy levels.}
  \label{table:pate-hyperparameters}
  \vspace{0.5em}
  \begin{tabular}{@{} l rrrrrrr @{}}
%    \toprule
     & & & & & Queries & \multicolumn{2}{c}{Accuracy}  \\
     %\cmidrule{8-9}
    Dataset & Teachers& $\sigma_1$ & $\sigma_2$ & $\tau$ & answered & Ensemble & Student  \\
    \toprule
     & 200 & 160 & 20 & 100 & 500 & 90.8\% & 93.7\% \\
    %\cmidrule{2-9}
    CIFAR-10                        & 800 & 800 & 300 & 400 & 250 & 60.8\% & 86.9\% \\
    %\cmidrule{2-9}
    {}                        & 800 & 800 & 500 & 400 & 250 & 36.4\% & 73.4\% \\
    \midrule
    {} & 20 & 7 & 2 & 2 & 9{,}400 & 74.0\% & 69.9\% \\
    %\cmidrule{2-9}
    CIFAR-100                        & 100 & 45 & 15 & 10 & 1{,}000 & 55.0\% & 50.0\% \\
    %\cmidrule{2-9}
    {}                        & 100 & 90 & 30 & 10 & 1{,}000 & 28.8\% & 30.5\% \\
    \bottomrule
  \end{tabular}
\end{table}

\begin{table}[h]
  \centering
   \caption{Hyperparameters of ALIBI (Algorithm~\ref{alg:alibi}) for select accuracy levels.}
   \vspace{0.5em}
  \label{table:alibi-hyperparameters}
  \begin{tabular}{@{} l rrr @{}}
 %   \toprule
     & Learning& Weight decay &  \\
    Dataset & rate& ($\times 10^{-4}$) & Accuracy \\
    \toprule
    \multirow{3}{*}{CIFAR-10} & 0.308 & 0.568 & 94.0\%  \\
     & 0.960 & 0.00796 & 84.2\%  \\
    & 0.315 & 1.50 & 71.0\%  \\
    \midrule
    \multirow{4}{*}{CIFAR-100} & 0.0037 & 33.5 & 75.3\%  \\
    & 0.0057 & 17.5 & 71.4\%  \\
    & 0.100 & 2.33 & 51.6\%  \\
    & 0.1 & 1.00 & 31.4\% \\
    \bottomrule
  \end{tabular}
\end{table}

% \begin{table}[h]
%   \centering
%   \caption{Hyperparameters of ALIBI.}
%   \label{table:rr-hyperparams}
%   \begin{tabular}{@{} lrrrrrrr @{}}
%     {} & \multicolumn{3}{c}{CIFAR-10} & \multicolumn{4}{c}{CIFAR-100} \\
%     \cmidrule(l{4pt}r{4pt}){2-4}\cmidrule(l{4pt}r{0pt}){5-8}
%     Accuracy      & 94.0\% & 84.2\% & 71.0\% & 75.3\% & 71.4\% & 51.6\% & 31.4\% \\
%     %\midrule
%     Learning rate & 0.308 & 0.960 & 0.315 & 0.037 & 0.057 & 0.100 & 0.1 \\
%     %\midrule
%     Weight decay ($\times 10^{-4}$) & 0.568 & 0.00796 & 1.50 & 33.5 & 17.5 & 2.33 & 1.00 \\
%     \bottomrule
%   \end{tabular}
% \end{table}

\section{Memorization: Formal treatment}
\label{apx:attack}

\newcommand{\CGR}{\ensuremath{\mathrm{CGR}}\xspace}
\newcommand{\ACGR}{\ensuremath{\mathrm{ACGR}}\xspace}

To empirically bound the level \eps of DP, prior work instantiates a general \emph{membership inference} game, defined in Figure~\ref{fig:game1} for two arbitrary neighboring datasets $D_0$ and $D_1$.

\begin{figure}[ht]
    \centering
    \begin{pcvstack}[boxed,center]
    \procedureblock[codesize=\small]{}{%
    \textbf{Attacker($D_0, D_1$)} \< \< \textbf{Challenger} \\[0.5\baselineskip][\hline]
    \\[-0.5\baselineskip]
    \< \sendmessageright*[1cm]{D_0, D_1} \< \\
    \<\< b \gets^\$ \{0,1\}\\
    \<\< M \gets \mathcal{M}(D_b)\\
    \< \sendmessageleft*[1cm]{M} \< \\
    b' \gets \mathcal{A}(D_0, D_1, M)\\
    \< \sendmessageright*[1cm]{b'} \< \\
    \<\< \text{output } b = b'
    }
    \end{pcvstack}
    \caption{Basic membership inference game, Game 1.}\label{fig:game1}
\end{figure}

By repeating this game multiple times, we can estimate the adversary's success rate and convert this into a lower bound on $\eps$.

This would be prohibitively expensive in our setting (each iteration of the game requires training a model on CIFAR-10 or CIFAR-100, and the game has to be repeated about $1{,}000$ times to get non-trivial bounds). 
We thus propose a heuristic approach for running multiple iterations of this game while training a \emph{single} model.

First, we will change the game slightly so as to allow the adversary to \emph{abstain} from issuing a guess on some instances. That is, the output range of $\mathcal{A}$ is $\{0, 1, \bot\}$. We then define the adversary's \emph{correct guess rate} (\CGR) as:
\[
\CGR_{D_0, D_1} \coloneqq \Pr[b=b' \mid b'=\mathcal{A}(D_0, D_1, \mathcal{M}(D_b))\wedge b'\neq \bot] \;.
\]
The probability is taken over the bit $b$, the randomness of the mechanism~$\mathcal{M}$ and the algorithm~$\mathcal{A}$.

\begin{theorem}If $\mathcal{M}$ satisfies $\eps$-DP and $D_0,D_1$ are two adjacent databases, then
\[
\eps \geq \log\left(\frac{\CGR_{D_0,D_1}}{1-\CGR_{D_0,D_1}}\right).
\]
\end{theorem}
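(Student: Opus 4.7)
The plan is to invoke the post-processing property of differential privacy on the composite mechanism $\widetilde{\mathcal{M}}(D) := \mathcal{A}(D_0, D_1, \mathcal{M}(D))$. Since $(D_0, D_1)$ are fixed inputs known to the adversary and $\mathcal{A}$ is a (possibly randomized) function of $\mathcal{M}$'s output alone, $\widetilde{\mathcal{M}}$ inherits $\eps$-DP from $\mathcal{M}$, with output range $\{0, 1, \bot\}$. The argument then reduces to comparing the output probabilities of $\widetilde{\mathcal{M}}$ on the two adjacent databases, and the rest is algebra.

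Concretely, for $i \in \{0, 1\}$ define
\[
a_i = \Pr[\widetilde{\mathcal{M}}(D_i) = i], \qquad w_i = \Pr[\widetilde{\mathcal{M}}(D_i) = 1 - i],
\]
so $a_i$ is the probability that the adversary guesses correctly when $b = i$, and $w_i$ the probability of an incorrect (non-$\bot$) guess. Because the challenger picks $b$ uniformly, the conditional probability in the definition of $\CGR_{D_0, D_1}$ unfolds as
\[
\CGR_{D_0, D_1} \;=\; \frac{\tfrac{1}{2} a_0 + \tfrac{1}{2} a_1}{\tfrac{1}{2}(a_0 + w_0) + \tfrac{1}{2}(a_1 + w_1)} \;=\; \frac{a_0 + a_1}{(a_0 + a_1) + (w_0 + w_1)},
\]
using that $\Pr[b' \neq \bot \mid b = i] = a_i + w_i$. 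Since $x \mapsto x/(x + c)$ is monotonically increasing in $x$ for any constant $c > 0$, it suffices to upper bound the correct-guess mass $a_0 + a_1$ by a multiple of the wrong-guess mass $w_0 + w_1$.

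For this I would apply the $\eps$-DP guarantee of $\widetilde{\mathcal{M}}$ to the singleton output sets $\{0\}$ and $\{1\}$: the events $\{\widetilde{\mathcal{M}}(D_0) = 0\}$ and $\{\widetilde{\mathcal{M}}(D_1) = 0\}$ are evaluated on adjacent databases, so $a_0 \leq e^\eps w_1$, and symmetrically $a_1 \leq e^\eps w_0$. Adding the two yields $a_0 + a_1 \leq e^\eps (w_0 + w_1)$; substituting into the display above gives $\CGR_{D_0, D_1} \leq e^\eps / (1 + e^\eps)$, which rearranges to $\eps \geq \log(\CGR_{D_0, D_1} / (1 - \CGR_{D_0, D_1}))$.

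There is no substantive obstacle. The only subtlety worth flagging is that one must invoke DP in both directions between $D_0$ and $D_1$ (once for each of the two ``correct'' events) to obtain a bound that is insensitive to the abstention probabilities. Abstentions cause no trouble precisely because conditioning on $b' \neq \bot$ restricts both numerator and denominator of the \CGR consistently, so they drop out of the final ratio.
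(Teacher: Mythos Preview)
Your proof is correct and is essentially the same argument as the paper's: both expand $\CGR_{D_0,D_1}$ in terms of the four probabilities $a_0,a_1,w_0,w_1$, apply $\eps$-DP (via post-processing) in both directions to bound $a_0\le e^{\eps}w_1$ and $a_1\le e^{\eps}w_0$, and conclude. The only cosmetic difference is that the paper works directly with the ratio $\CGR/(1-\CGR)=(a_0+a_1)/(w_0+w_1)$ and invokes the mediant inequality, whereas you add the two DP bounds first and then use monotonicity of $x\mapsto x/(x+c)$; these are interchangeable rearrangements of the same step.
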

    
\begin{proof}
For notational convenience, define $A_x$ as a random variable distributed according to $\mathcal{A}(D_0, D_1, \mathcal{M}(D_x))$ for $x\in \{0,1,b\}$. Then
\begin{align}
    \frac{\CGR_{D_0,D_1}}{1-\CGR_{D_0,D_1}} &= \frac{\Pr[b'=b \mid b'=A_b\wedge b'\neq \bot]}{\Pr[b'=1-b \mid b'=A_b\wedge b'\neq \bot]} \notag\\
    &= \frac{\Pr[A_b=b]}{\Pr[A_b = 1-b]}\notag \\
    &= \frac{\Pr[A_0 = 0] + \Pr[A_1 = 1]}{\Pr[A_0 = 1] + \Pr[A_1 = 0]} \notag\\
    &\leq \max \left\{
    \frac{\Pr[A_0 = 0]}{\Pr[A_1 = 0]}
    ,
    \frac{\Pr[A_0 = 1]}{\Pr[A_1 = 1]}
    \right\}\tag{by the mediant inequality} \\
    &\leq e^\eps\notag \;,
\end{align}
where the last inequality follows from the assumption that $\mathcal{M}$ is $\eps$-DP.
\end{proof}

It now remains to be seen how we can bound the adversary's correct guessing rate \CGR.
We define Game 2 (Figure~\ref{fig:game2}) by making a small change to Game 1 above, so that the neighboring datasets $D_0$ and $D_1$ are chosen \emph{at random} in each iteration of the game, by flipping the label of one example of a common dataset~$D$.

\begin{figure}[ht]
    \centering
    \begin{pcvstack}[boxed,center,space=1em]
    \procedureblock[codesize=\small]{}{%
    \textbf{Attacker} \< \< \textbf{Challenger($D$)}
    \\[0.5\baselineskip][\hline]
    \\[-0.5\baselineskip]
    \<\< D_0, D_1 \gets D\\
    \<\<i \gets^\$ [1:|D|]\\
    \<\<y', y'' \gets^\$ \big\{[1:C] \setminus \{y_i\}\big\}\text{, s.t. } y'\neq y''\\
    \<\<(D_0)_i \gets (x_i, y')\\
    \<\<(D_1)_i \gets (x_i, y'')\\
    \<\< b \gets^\$ \{0,1\}\\
    \<\< M \gets \mathcal{M}(D_b)\\
    \< \sendmessageleft*[1cm]{D_0, D_1, M} \< \\
    b' \gets \mathcal{A}(D_0, D_1, M)\\
    \< \sendmessageright*[1cm]{b'} \< \\
    \<\< \text{output } b = b'
    }
    \end{pcvstack}
    \caption{$D_0$ and $D_1$ are defined randomly (Game 2).}\label{fig:game2}
\end{figure}

Similar to Game 1, we define the adversary's probability of winning in Game 2 conditional on $\mathcal{A}$'s output not being $\bot$. Let the \emph{average \CGR} in Game 2 (\ACGR) be:
\begin{align*}
\ACGR_D &\coloneqq \mathop{\mathbb{E}}_{D_0, D_1}\left[\Pr[b=b' \mid b'=\mathcal{A}(D_0, D_1, \mathcal{M}(D_b))\wedge b'\neq \bot]\right]\\
&\hphantom{:}= \mathop{\mathbb{E}}_{D_0, D_1}\left[\CGR_{D_0, D_1}\right] \;.
\end{align*}
If we can lower-bound \ACGR by some value $\alpha$, then there exists at least one pair of neighboring datasets $D_0$, $D_1$ such that $\CGR_{D_0, D1} \geq \alpha$. As the DP guarantee has to hold for \emph{all} neighboring datasets, we can use a bound on \ACGR to bound $\eps$.

Finally, instead of repeating Game 2 many times to get a bound on \ACGR, we instead simulate multiple iterations of Game 2 at once, which becomes Game 3 (formally defined in Figure~\ref{fig:game3}).
\begin{figure}[ht]
    \centering
    \begin{pcvstack}[boxed,center,space=1em]
    \procedureblock[codesize=\small]{}{%
    \textbf{Attacker} \< \< \textbf{Challenger($D$)}
    \\[0.5\baselineskip][\hline]
    \\[-0.5\baselineskip]
    \<\< I \gets^\$ \left\{I \subseteq [1:|D|] \mid |I|=N \right\}\\
    \<\< b_1, \dots, b_N \gets^\$ \{0,1\}^N\\
    \<\< D^* \gets D\\ 
    \<\<\pcfor i \in I \pcdo\\
    \<\<    \pcind y_i', y_i'' \gets^\$ \big\{[1:C] \setminus \{y_i\}\big\}\text{, s.t. } y'\neq y'' \\
    \<\<    \pcind\pcif b_i = 0 \pcthen\\
    \<\<    \pcind\pcind D^*_i \gets (x_i, y_i')\\
    \<\<    \pcind\pcelse\\
    \<\<    \pcind\pcind D^*_i \gets (x_i, y_i'')\\
    \<\< M \gets \mathcal{M}(D^*)\\
    \< \sendmessageleft*[3.5cm]{M, D, D^*, I, \{(y_i', y_i'')\}_{i=1}^N} \< \\
    \pcfor i \in I \pcdo\\
    \pcind D_0, D_1 \gets D\\
    \pcind (D_0)_i \gets (x_i, y_i')\\
    \pcind (D_1)_i \gets (x_i, y_i'')\\
    \pcind b_i' \gets \mathcal{A}(D_0, D_1,  M)\\
    \< \sendmessageright*[3.5cm]{b_1', \dots, b_N'} \< \\
    \<\< \text{output } \{b_1 = b_1', \dots, b_N=b_N'\}
    }
    \end{pcvstack}
    \caption{Game 3.}\label{fig:game3}
\end{figure}
The heuristic step here is that we assume that each of the $N$ guesses made by the adversary are independent from each other and reflect the adversary's guesses in~$N$ independent iterations of Game~2.

Given one iteration of Game 3 with $N$ ``canaries'', we can compute a lower bound on the adversary's average \CGR using standard confidence intervals for a Binomial random variable. That is, we count the number of correct guesses among the $M\leq N$ instances where the adversary made a guess $b_i'\neq \bot$, and apply a Clopper-Pearson bound.

We can improve the tightness of this bound further. In Game 3, for each canary $(x_i, y_i')$ that the model is trained on (assuming $b_i=0$), we record the adversary's guess with respect to only one other random label $y''$. Yet, we could record the adversary's guess with respect to \emph{all} $C-2$ choices of $y_i'' \neq y_i, y_i'$ to get a tighter estimate of the adversary's average success rate. However, we definitely cannot treat these guesses as independent. Instead, we first estimate the adversary's (empirical) average correct guessing rate $\overline{\ACGR}_i$ for each canary (where the average is taken over all possible choices for $y_i''$):
\begin{align*}
     b_{i,j}' &\coloneqq \mathcal{A}(D^{(i,j)}_{b_{i,j}}, D^{(i,j)}_{1-b_{i,j}}, M)\quad\text{for  }j=1,\dots,C-2,\\
    m_i &\coloneqq \sum_{j=1}^{C-2} [b_{i,j}' \neq \bot],\\
    \overline{\ACGR}_i &\coloneqq \frac{1}{m_i}\sum_{j=1}^{C-2} [b_{i,j}' = b_{i,j}],
\end{align*}
where $b_{i,j}$ are iid uniform bits, $D^{(i,j)}_0$ are $D$ for all $i$ and $j$,  $D^{(i,j)}_1,\dots, D^{(i,j)}_1$ are copies of $D$ with the $C-2$ possible choices for the label $y_i''$, and $[\cdot]$ is the Iverson bracket. (By convention, $0/0=0$.) We then compute a confidence interval for the empirical mean of all the~$\overline{\ACGR}_i$.
As the adversary may abstain from making a guess with a different probability for each canary (i.e., the $m_i$'s may not all be equal)
we have to weigh the $\overline{\ACGR}_i$ values accordingly. That is, we compute the \emph{weighted} average and standard deviation of the $\overline{\ACGR}_i$ with the $M_i$ as \emph{reliability weights}. Finally, we obtain a confidence interval for the adversary's $\overline{\ACGR}$ using a standard $95\%$ confidence interval for the normal distribution.

Finally, it remains to define our adversary $\mathcal{A}(D_0, D_1, M)$ where $(x_i, y') \in D_0$ and $(x_i, y'') \in D_1$. The adversary simply looks at the model's confidence on $x_i$ for both possible labels and guesses that the more confident of the two is the label that the model was trained on. However, if both labels have confidence below some fixed threshold $\tau$, the adversary abstains. Formally:
\[
\mathcal{A}(D_0, D_1, M) = \begin{cases}
\bot & \text{if } \max(M(x_i)_{y'}, M(x_i)_{y''}) < \tau \\
[M(x_i)_{y'} > M(x_i)_{y''}] & \text{otherwise}
\end{cases} \;.
\]
We consider different thresholds $\tau \in [0.5, 0.99]$ and report the best resulting attack (i.e., the setting with the highest lower-bound on $\overline{\ACGR}$). For simplicity, we omit corrections for multiple hypothesis testing.

%for all models. That is, we guess that a random label that achieves~$99\%$ confidence must have come from the training set. 
%With PATE-FM on CIFAR-10, the model essentially never memorizes any canary labels, but we can get non-vacuous bounds on $\eps$ by guessing that a random label with over $1\%$ confidence is more likely to be the true canary.\atflorian{Ilya: You draw distinction between memorization and having higher confidence. I am not sure I follow---both evidence of having been exposed to a label. What's the difference?}  
\iffalse
\begin{table}[ht]
    \centering
    \caption{Thresholds for computing $\eps_m$.}\label{table:thresholds}
    \begin{tabular}{@{}l l r@{}}
         \textbf{Dataset} & \textbf{Approach} & \textbf{Threshold $\tau$}\\
         \toprule
         \multirow{2}{*}{CIFAR-10} & PATE-FM & 0.01\\
          & ALIBI & 0.5\\
         \midrule
        \multirow{2}{*}{CIFAR-100} & PATE-FM & 0.5\\
          & ALIBI & 0.5\\
         \bottomrule
    \end{tabular}
\end{table}
\fi

\section{Memorization: Validating the Heuristic}

The previous section introduces a sequence of security games (Figures~\ref{fig:game1}--\ref{fig:game3}) that relate the success probability of a membership inference adversary (Game 1) to an efficient computational procedure (Game 3). It starts by randomizing a single instance of Game 1 into Game 2. In order to compute the adversary's probability of winning Game 2 with sufficient accuracy, the experiment needs to be repeated hundreds of times. Doing so would be prohibitively expensive as each run requires training a new model from scratch.

We use a heuristic whereby the independent runs of Game 2 are replaced with correlated instances of the membership inference game that share the same trained model (Game 3). Concretely, it means that instead of introducing a single canary (a mislabeled input) into a training dataset, Game 3 injects multiple canaries all at once. While doing so does change the input distribution of the training procedure, we argue that the overall effects are minimal and do not qualitatively affect our findings.

We test the heuristic's validity by comparing the adversary's advantage against ALIBI in Game 3, where the number of simultaneously inserted canaries is $N=1{,}000$ as in Table~\ref{table:comparitive-results}, with ten repetitions of Game 3 with $N=100$. The results are presented in Table~\ref{table:heuristic}. Notably, the 95\% confidence intervals for $\eps_m$ are in very close agreement, thus supporting the heuristic. %(The only significant misalignment between confidence intervals occurs in the high-privacy CIFAR-10 training, where the confidence interval for both experiments includes 0.)

\begin{table}[th]
  \centering
  \caption{Comparison of 95\% confidence intervals (CI) for the membership inference adversary against ALIBI given a single run of Game 3 with $N=1000$ and 10 runs of Game 3 with $N=100$.}
  \vspace{0.5em}
  \label{table:heuristic}
  \begin{tabular}{@{} lllcc @{}}
    %\toprule
     &  & \multicolumn{2}{c}{95\%-CI $\eps_m$} \\
     \cmidrule{3-4}
    Dataset & Accuracy level & 1{,}000 labels & 10$\times$100 labels \\    
    \toprule
    \multirow{3}{*}{CIFAR-10} 
        & High & 2.9--4.0 & 2.7--3.6 \\
        & Medium & 1.0--2.2 & 0.2--2.5 \\
        & Low & 0.0--2.2 & 0.0--1.6 \\
    \midrule 
    \multirow{3}{*}{CIFAR-100}
        & High & 2.8--3.5 & 2.7--3.4 \\
        & Medium & 1.4--2.4 & 1.3--2.0 \\
        & Low & 0.6--1.0 & 0.6--1.1 \\
    \bottomrule
  \end{tabular}
\end{table}

\section{Post-processing for Soft Randomized Response}\label{a:post-processing}

For completeness, we describe Soft RR variants instantiated with uninformed post-processing and the Gaussian mechanism. We found that ALIBI dominates alternatives by achieving better accuracy with stronger privacy. In particular, ALIBI's upper bounds are stronger than those of the Gaussian mechanisms for the same levels of accuracy, while their empirical privacy losses are statistically indistinguishable.

\subsection{Uninformed post-processing} 
Given Soft-RR's output vector $\mathbf{o}$, we may reduce error by mapping it to the closest point on the probability simplex (compare with Nikolov et al.~\cite{NTZ13-geometry}). In other words, we are solving the following constrained optimization problem:
\[
\min \|\mathbf{o^*} - \mathbf{o} \|_2 \quad\textrm{subject to }
   \begin{cases}
   \forall i\;\; 0\leq o^*_i\leq 1,  \\
    \sum_i o^{*}_{i} = 1
\end{cases}
\]
The problem is (strictly) convex, and thus admits a unique, efficiently computable solution. Moreover, a particularly simple and efficient method (Algorithm~\ref{minproj}) exists due to Duchi et al.~\cite{duchi08-projections} (see also Wang and Carreira-Perpiñán for other approaches~\cite{wang2013projection}). 

\begin{figure}[h]
\begin{algorithm}[H]
\label{minproj}
\SetAlgoLined
\textbf{Input}: $\mathbf{o} = {(o_1, \ldots, o_C)}\in \mathbb{R}^C$\\
\textbf{Output}: Projection of $\mathbf{o}$ onto the probability simplex\\
Sort $\mathbf{o}$ as $s_1\geq s_2\geq\dots\geq s_C$\\
Find $k\gets \max_j \left\{j\in [1:C]\colon s_j > \frac1j(\sum_{i=1}^j s_i-1)\right\}$\\
$u\gets \frac1k(\sum_{i=1}^k s_i-1)$\\
\For{$i\gets 1$ \KwTo $C$}{
    $o'_i\gets \max(o_i-u, 0)$\\
}
Output $\mathbf{o}'$
 \caption{Post-processing using Min Projection (Duchi et al.~\cite{duchi08-projections}).}\label{alg:min-projection}
\end{algorithm}
\end{figure}

\subsection{Bayesian post-processing on Additive Gaussian Mechanism}
This follows the same post-processing algorithm as ALIBI as described in Section \ref{ss:post-processing}, except for Gaussian noise instead of Laplace. Eq.~(\ref{laplace-prop}) changes to the following (note the switch of the noise parameter from $\lambda$ to $\sigma$):
\begin{equation}
    \label{gaussian-prop}
    p(\mathbf{o} \mid y=k , \sigma) \propto e^{-\frac{(\mathbf{o}_{k} - 1)^2}{2 \sigma^2}} \prod_{j \neq k} e^{-\frac{\mathbf{o}_{j}^2}{2 \sigma^2}}.
\end{equation}
Plugging (\ref{gaussian-prop}) in (\ref{postproc-posterior-general}) we have:
\begin{equation}
    \label{gaussian-posterior}
    p(y=c \mid \mathbf{o}, \sigma) = \frac{e^{\mathbf{o}_c/\sigma ^2}\cdot p (y=c)}{\sum_k e^{\mathbf{o}_k/\sigma ^2}\cdot p(y=k)} = \mathrm{SoftMax}(\mathbf{o}_c / \sigma^2 + \log p(y=c)).
\end{equation}
The training algorithm as described in Algorithm~\ref{alg:alibi} can be modified by setting \BPP to (\ref{gaussian-posterior}) to work in this setting.

The results of applying the Gaussian mechanism (``AGIBI'') are reported in Table~\ref{table:agibi} together with ALIBI performance from Table~\ref{table:comparitive-results} for ease of comparison. The claimed privacy losses (the \eps column) strongly favor ALIBI over the Gaussian mechanism; the empirically computed privacy loss lower bounds do not separate the two mechanisms.
\begin{table}[h]
  \centering
  \caption{ALIBI and AGIBI on CIFAR-10 and CIFAR-100 using Wide-ResNet18, matched by test accuracy levels. Empirical privacy loss $\eps_m$ is reported as a 95\% confidence interval (CI).}
  \label{table:agibi}
  \vspace{0.5em}
  \begin{tabular}{@{} lllrcc @{}}
    %\toprule
    Dataset & Accuracy level & Algorithm & Accuracy & $\eps$ & 95\%-CI $\eps_m$ \\
    \toprule
    \multirow{7}{*}{CIFAR-10} & \multirow{2}{*}{High}  & ALIBI & 94.0\% & 8.0 & 2.9--4.0 \\
     {} & {} & AGIBI & 93.5\% & 19 & 2.1--3.2\\
    \cmidrule{2-6}
    {}                        & \multirow{2}{*}{Medium}  & ALIBI & 84.2\% & 2.1 & 1.0--2.2 \\
    {} & {} & AGIBI & 84.3\% & 7.7 & 0.7--1.4 \\
    \cmidrule{2-6}
    {}                        & \multirow{2}{*}{Low} & ALIBI & 71.0\% & 1.0 & 0.0--2.2 \\
    {} & {} & AGIBI & 71.3\% & 3.7 & 0.0--2.0 \\
    \midrule
    \multirow{7}{*}{CIFAR-100} &  \multirow{2}{*}{High} & ALIBI & 71.4\% & 6.3 & 2.8--3.5 \\
    {} & {} & AGIBI & 69.9\% & 17 & 2.7--3.4 \\
    \cmidrule{2-6}
    {}                         & \multirow{2}{*}{Medium} & ALIBI & 51.6\% & 3.0 & 1.4--2.4 \\
    {} & {} & AGIBI & 50.8\% & 8.4 & 1.3--2.0 \\
    \cmidrule{2-6}
    {}                         & \multirow{2}{*}{Low} & ALIBI & 31.4\% & 2.0 & 0.6--1.0 \\
    {} & {} & AGIBI & 28.7\% & 5.4 & 0.6--1.1\\
    \bottomrule
  \end{tabular}
\end{table}

\end{document}